\def\eqref#1{equation~\ref{#1}}
\def\1{\bm{1}}
\DeclareMathAlphabet{\mathsfit}{\encodingdefault}{\sfdefault}{m}{sl}
\SetMathAlphabet{\mathsfit}{bold}{\encodingdefault}{\sfdefault}{bx}{n}
\newcommand{\R}{\mathbb{R}}
\definecolor{light_blue}{HTML}{DCE6F1}
\newcommand\nl[1]{\textit{``#1''}}
\newcommand\idiomem{\textsc{IdioMem}}
\newcommand\magpie{\textsc{MAGPIE}}
\newcommand\lidioms{\textsc{LIdioms}}
\newcommand\efdata{\textsc{EF}}
\newcommand\epie{\textsc{EPIE}}
\newcommand\wikitext{\textsc{WikiText-103}}
\newcommand\lama{\textsc{LAMA}}
\newcommand\lamauhn{\textsc{LAMA-UHN}}
\newcommand\gpt{\textsc{GPT2}}
\newcommand\gptm{\textsc{GPT2}$_{\textsc{M}}$}
\newcommand\gptl{\textsc{GPT2}$_{\textsc{L}}$}
\newcommand\bertb{\textsc{BERT}$_{\textsc{B}}$}
\newcommand\bertl{\textsc{BERT}$_{\textsc{L}}$}
\newcommand\mem{\texttt{mem-idiom}}
\newcommand\unmem{\texttt{non-mem-idiom}}
\newcommand\wiki{\texttt{wiki}}
\definecolor{darkpastelpurple}{rgb}{0.59, 0.44, 0.84}
\title{Understanding Transformer Memorization Recall Through Idioms}
\author{
Adi Haviv$^{\tau}$ \hspace{0.5cm} Ido Cohen$^{\tau}$ \hspace{0.5cm}  Jacob Gidron$^{\tau}$ \hspace{0.5cm} \textbf{Roei Schuster}$^{\mu}$  \\\textbf{Yoav Goldberg}$^{\alpha\beta}$ \hspace{0.5cm}\textbf{Mor Geva}$^{\alpha} \thanks{\;\;~~Now at Google Research.}$ \\ \\
$^{\tau}$Tel Aviv University \hspace{0.04cm}
$^{\mu}$Wild Moose \hspace{0.04cm}
$^{\beta}$Bar-Ilan University \hspace{0.04cm} 
$^{\alpha}$Allen Institute for AI \\
\small{\texttt{adi.haviv@cs.tau.ac.il}}, 
\small{\texttt{roei@wildmoose.ai, pipek@google.com}}, \\
\small{\texttt{\{its.ido, jacob.u.gidron, yoav.goldberg\}@gmail.com}} \\
}
\newtheorem{claim}{Claim}[section]
\begin{document}
\maketitle
\begin{abstract}

To produce accurate predictions, language models (LMs) must balance between generalization and memorization. Yet, little is known about the mechanism by which transformer LMs employ their memorization capacity. When does a model decide to output a memorized phrase, and how is this phrase then retrieved from memory? 
In this work, we offer the first methodological framework for probing and characterizing recall of memorized sequences in transformer LMs. 
First, we lay out criteria for detecting model inputs that trigger memory recall, and propose \emph{idioms} as inputs that typically fulfill these criteria.  Next, we construct a dataset of English idioms and use it to compare model behavior on memorized vs. non-memorized inputs. Specifically, we analyze the internal prediction construction process by interpreting the model's hidden representations as a gradual refinement of the output probability distribution. We find that across different model sizes and architectures, memorized predictions are a two-step process: early layers promote the predicted token to the top of the output distribution, and upper layers increase model confidence. This suggests that memorized information is stored and retrieved in the early layers of the network. 
Last, we demonstrate the utility of  our methodology beyond idioms in memorized factual statements.
Overall, our work makes a first step towards understanding memory recall, and provides a methodological basis for future studies of transformer memorization.\footnote{\label{foot:codedata}Our code and data 
are available at \url{https://github.com/adihaviv/idiomem/}.}
%are available at \url{anonymized}.}

\end{abstract}

\section{Introduction}
\label{sec:intro}

\begin{figure*}[t]
    \centering
    \includegraphics[width=\textwidth]{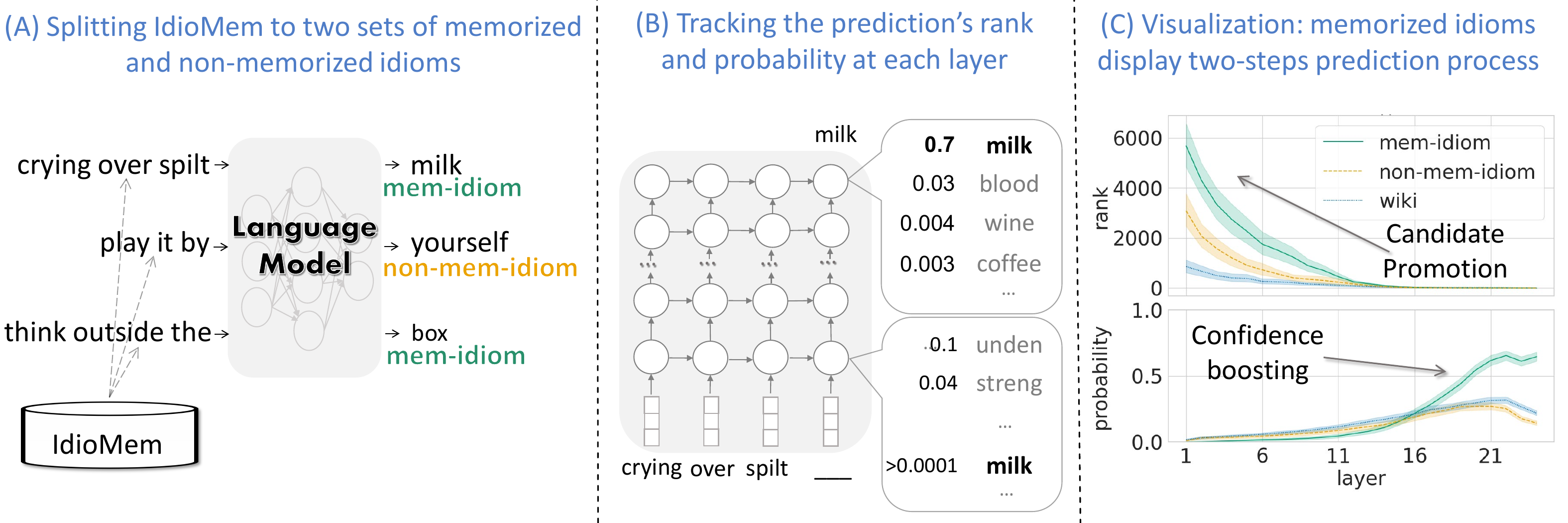}
    \caption{Our methodological framework for probing and analyzing memorized predictions of a given LM: (A) we create two sets of memorized (\texttt{mem-idiom}) and non-memorized (\texttt{non-mem-idiom}) idioms by probing the LM with instances from \idiomem{}, (B) for each instance, we extract hidden features of the prediction computation -- the rank and probability of the predicted token across layers, and (C) we compare the prediction process of memorized idioms versus non-memorized idioms and short sequences from Wikipedia (\texttt{wiki}). Memorized predictions exhibit two characteristic phases: candidate promotion and confidence-boosting.}
    \label{fig:main_intro}
\end{figure*}

Transformer language models (LMs) memorize instances from their training data \citep{carlini2021extracting, zhang2021counterfactual}, and evidence is building that such memorization is an important precondition for their predictive abilities \citep{lee-etal-2022-deduplicating,feldman2020does,feldman2020neural,raunak-etal-2021-curious,Raunak2022FindingME}. Still, it is unknown when models decide to output memorized sequences, and how these sequences are being retrieved internally from memory. 
Current methods for analyzing memorization~\citep{feldman2020neural,zhang2021counterfactual,carlini2022quantifying} use definitions that are based on models performance, which changes between models and often also between training runs. Moreover, these methods study memorization behavior in terms of the model's ``black-box'' behavior rather than deriving a behavioral profile of memory recall itself.

Our first contributions are to provide a definition and construct a dataset that allows probing memorization recall in LMs.
We define a set of criteria for identifying memorized sequences that do not depend on model behavior:\footnote{Literature often purports to ``define memorization'', resulting in a multitude of technical definitions with subtle differences, although we would expect this concept to be consistent and intuitive. Thus, instead of explicitly defining ``memorization'', we will define \emph{sufficient criteria} for detecting memorized instances.}
sequences that have a single plausible completion that is independent of context and can be inferred only given the entire sequence.
We show that many idioms (e.g., ``play it by ear'') fulfill these conditions, allowing us to probe and analyze memorization behavior.
Furthermore, we construct a dataset of such English idioms, dubbed \idiomem{}, and release it publicly for the research community.

Next, to analyze memory recall behavior, we compare the construction process of predictions that involve memory recall with those that do not.
To this end, given a LM, we create two sets of memorized and non-memorized idioms from \idiomem{} (Fig.~\ref{fig:main_intro}, A). We then adopt a view of the transformer inference pass as a gradual refinement of the output probability distribution \citep{geva-etal-2021-transformer,anthropic}. Concretely, the token representation at any layer is interpreted as a ``hidden'' probability distribution over the output vocabulary \citep{geva2022transformer} (Fig.~\ref{fig:main_intro}, B). 
This interpretation allows tracking the prediction across layers in the evolving distribution.
We find a clear difference in model behavior between memorized and non-memorized predictions (Fig.~\ref{fig:main_intro}, C). This difference persists across different transformer architectures and sizes: retrieval from memory happens in two distinct phases, corresponding to distinct roles of the transformer parameters and layers: (1) \emph{candidate promotion} of memorized predictions' rank in the hidden distribution in the first layers, and (2) \emph{confidence boosting} where, in the last few layers, the prediction's probability grows substantially faster than before. This is unlike non-memorized predictions, where the two phases are less pronounced and often indistinct.
We further confirm these phases of memorized predictions through intervention in the network's FFN sublayers, which have been shown to play an important role in the prediction construction process \citep{geva2022transformer, mickus-etal-2022-dissect}. Concretely, zeroing out hidden FFN neurons in early layers deteriorate memory recall, while intervention in upper layers does not affect it.

Last, we show our findings extend to types of memory recall beyond idioms by applying our method to factual statements from the \lamauhn{} dataset \citep{Poerner2020EBERTEE} (e.g. \nl{The native language of Jean Marais is French}). For factual statements that were completed correctly by the LM, we observe the same two phases as in memorized idioms, further indicating their connection to memory recall.

To summarize, we construct a novel dataset of idioms, usable for probing LM memorization irrespective of the model architecture or training parameterization. We then design a probing methodology that extracts carefully-devised features of the internal inference procedure in transformers. By applying our methodology and using our new dataset, we discover a profile that characterizes memory recall, across transformer LMs and types of memorized instances.
Our released dataset, probing framework, and findings open the door for future work on transformer memorization, to ultimately demystify the internals of neural memory in LMs.

\section{Criteria for Detecting Memory Recall}
\label{sec:input_criteria}

To study memory recall, we require a set of inputs that trigger this process. Prior work on memorization focused on detecting instances whose inclusion in the training data has a specific influence on model behavior, such as increased accuracy on those instances \citep{feldman2020neural,magar-schwartz-2022-data, carlini2022quantifying, carlini2021extracting, carlini2019secret}. 
As a result, memorized instances differ across models and training parameterization.
%This results in sets of memorized instances that differ across models and training parameterization.
Our goal is instead to find a stable dataset of sequences that correctly predicting their completion indicates memorization recall.
This will greatly reduce the overhead of studying memorization and facilitate useful comparisons across models and studies.

To build such a dataset, we start by defining a general set of criteria that are predicates on sequence features, entirely independent of the LM being probed.
Given a textual sequence of $n$ words, we call the first $n-1$ words the \textit{prompt} and the $n$th word the \textit{target}. We focus on the task of predicting the target given the prompt, i.e., predicting the last word in a sequence given its prefix.\footnote{In cases where tokenization divides the target to sub-tokens, our task becomes predicting the target's first token.}
Such predictions can be based on either generalization or memorization, and we are interested in isolating memorized cases to study model behavior on them. Particularly, we are looking for \emph{sequences for which success in this task implies memorization recall}.

We argue that the following criteria are \emph{sufficient} for detecting such memorized sequences:
\begin{enumerate}
[leftmargin=*,topsep=1pt,parsep=1pt]
    \item \textbf{Single target, independent of context:} We require that the target is the only correct continuation, regardless of the textual context where the prompt is placed.\footnote{We assume that contexts are naturally-occurring and not adversarial.}

    \item \textbf{Irreducible prompt:} The target is the single correct completion only if the entire prompt is given exactly. Changing or removing parts from the prompt would make the correct target non-unique.

\end{enumerate}

\begin{claim}
\label{claim:criteria}
Assume a sequence fulfills the above criteria. Then, if a LM correctly predicts the target, it is highly likely that this prediction involves memory recall.
\end{claim}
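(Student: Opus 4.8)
The plan is to argue by elimination: any correct prediction of the target must arise through one of two mechanisms---\emph{generalization} or \emph{memorization}---and the two criteria jointly exclude generalization, leaving memory recall as the only plausible explanation. First I would make this dichotomy explicit by characterizing generalization as the model deriving the target from linguistic regularities: either the compositional or semantic content of the prompt, broad distributional patterns learned across training, or disambiguating signal from the surrounding context. Memorization, by contrast, is the storage and retrieval of the specific prompt--target association itself. The goal is then to show that neither the contextual nor the compositional route to generalization is available under the two criteria.

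Next I would dispatch the contextual route using Criterion 1. Because the target is the unique admissible completion \emph{regardless} of the context in which the prompt is embedded, the surrounding text carries no information that helps select the target over alternatives; any successful prediction must therefore rest on the prompt in isolation. I would then dispatch the compositional and pattern-based route using Criterion 2. Irreducibility states that the target becomes non-unique once any part of the prompt is altered or removed. Hence the target is not recoverable from any proper subsequence of the prompt, nor from its general semantic gist---if it were, a reduced or paraphrased prompt would still determine it, contradicting irreducibility. A generalizing mechanism, which by its nature extrapolates from partial or semantic cues, thus cannot pin down the unique target from anything short of the exact full prompt. The only remaining mechanism that can map the exact full prompt to its exact target is a stored association, i.e., memory recall.

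The main obstacle is that the memorization/generalization boundary is not crisply formalizable, so the argument is necessarily a plausibility argument rather than a deductive proof---this is precisely why the statement asserts the conclusion is ``highly likely'' rather than certain. I would therefore close by bounding the residual cases: a model might hit the target through a spurious correlation, or a rare naturally-occurring context might happen to over-constrain the completion. I would argue these paths are improbable exactly because Criterion 1 is quantified over \emph{all} non-adversarial contexts while Criterion 2 forbids \emph{any} sub-prompt shortcut, so the coincidental routes that survive both filters simultaneously are negligible under the stated non-adversarial assumption. This step is what converts the informal dichotomy into the claimed high-probability conclusion, and getting its framing right---rather than overclaiming certainty---is the delicate part of the proof.
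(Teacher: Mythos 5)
Your proposal is correct and follows essentially the same route as the paper's justification: an argument by elimination over the generalization/memorization dichotomy, where the uniqueness of the target (Criterion 1) rules out generalizing predictions---which always admit plausible alternatives---and irreducibility (Criterion 2) rules out predictions based on partial-prompt cues or spurious correlations. Your explicit framing of the residual-case bound as the source of the ``highly likely'' hedge matches the paper's informal treatment, just stated more systematically.
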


\begin{proof}[Justification]
First, observe that most natural-language prompts have many possible continuations. For example consider the sentence \nl{to get there fast, you can take this \textunderscore\textunderscore\textunderscore\textunderscore}. Likely continuations include \nl{route}, \nl{highway}, \nl{road}, \nl{train}, \nl{plane}, \nl{advice}, inter alia. Note that there are several divergent interpretations or contexts for the prompt, and for each, language offers many different ways to express similar meaning.

A prediction that is a product of generalization --- i.e., it is derived from context and knowledge of language --- \emph{always has plausible alternatives}, depending on the context and stylistic choice of words. Hence, the relationship between the entire prompt and the target, where the target is the single correct continuation, is something that needs to be memorized rather than derived via generalization.
A LM that predicts \emph{the} single correct continuation either memorized this relationship, or used ``cues'' from the prompt that happen to provide indication towards the correct continuation. To illustrate the latter, consider the sequence \nl{it's raining cats and \textunderscore\textunderscore\textunderscore\textunderscore} which has a single correct continuation, \nl{dogs}, but a LM might predict it without observing this sequence during training, due to the semantic proximity of \nl{cats} and \nl{dogs}. Our second criterion excludes such cases by requiring that the correct continuation is only likely given the entire sequence.

Therefore, a LM that correctly completes a sequence that fulfills both criteria, is likely to have recalled it from memory.
\end{proof}

In the next section, we argue that idioms are a special case of such sequences, and are thus useful for studying memorization (\S\ref{sec:probing_idioms}).

\section{The Utility of Idioms for Studying Memorization}
\label{sec:probing_idioms}

An idiom is a group of words with a meaning that is not deducible from the meanings of its individual words. For example, consider the phrase \nl{play it by ear} --- there is a disconnect between its nonsensical literal meaning (to play something by a human-body organ called `ear') and its intended idiomatic meaning (to improvise).

A key observation is that \emph{idioms often satisfy our criteria (\S\ref{sec:input_criteria}), and therefore can probe memorization}. 
First, by definition, idioms are expected to be non-compositional \cite{dankers-etal-2022-transformer}. 
They are special ``hard-coded'' phrases that carry a specific meaning. As a result, their prompts each have a single correct continuation, regardless of their context (criterion 1). For example, consider the prompt \nl{crying over spilt \textunderscore\textunderscore\textunderscore\textunderscore} --- a generalizing prediction would predict that this slot may be filled by any spillable item, like wine, water or juice, while a memorized prediction will retrieve only milk in this context. Notably, while this is an empirical characterization of many idioms, there might be exceptions, e.g., contexts that are adversarially chosen to change the completion.
Second, many idioms are ``irreducible'', for example, the sub-sequences \nl{crying over} or \nl{over spilt} by themselves have but a scant connection to the word \nl{milk}.

Still, not all idioms fulfill the criteria. For example, even when the idiom is far from literal, its constituents sometimes strongly indicate the correct continuation, such as with the case of \nl{it's raining cats and \textunderscore\textunderscore\textunderscore\textunderscore} (as explained in~\S\ref{sec:input_criteria}). To construct a dataset of memorization-probing sequences, we will carefully curate a set of English idioms and filter out ones that do not fulfill our criteria.

\subsection{The \idiomem{} Dataset}
\label{subsec:idiomem}

We begin with existing datasets of English idioms: \magpie{}~\citep{haagsma-etal-2020-magpie},\footnote{We take idioms with annotation confidence of {$>75\%$} and exclude frequently occurring literal interpretations.} \epie{} \citep{saxena2020epie}, and the English subset of \lidioms{} \citep{moussallem-etal-2018-lidioms}. We enrich this collection with idioms scraped from the website ``Education First'' (EF).\footnote{\url{https://www.ef.com/wwen/english-resources/english-idioms/}}
We then split each idiom into a prompt containing all but the last word and a target that is the last word. 
Next, we filter out idioms that do not comply with our criteria (\S\ref{sec:input_criteria}) or whose target can be predicted from their prompt based on spurious correlations rather than memorization.
To this end, we use three simple rules:

\begin{itemize}
[leftmargin=*, topsep=1pt, parsep=1pt]

\item \textbf{Short idioms}. We observe that prompts of idioms with just a few words often have multiple plausible continuations, that are not necessarily the idiom's target, violating our first criterion. 
For example, the prompt \nl{break a \textunderscore\textunderscore\textunderscore\textunderscore} has many possible continuations (e.g. \nl{window}, \nl{promise}, and \nl{heart}) in addition to its idiomatic continuation \nl{leg}. To exclude such cases, we filter out idioms with $<4$ words.

\begin{table}[t]
    \setlength{\belowcaptionskip}{-10pt}
    \centering
    \footnotesize
    \begin{tabular}{@{}lcc@{}}
    \textbf{Source} & \textbf{\# of Idioms} & \textbf{Idiom Length} (words)
    \\ \toprule
    \magpie{} & 590 & $4.5\pm0.9$ \\ 
    \lidioms{} & 149 & $5.1\pm1.2$ \\ 
    \efdata{} & 97 & $5.6\pm1.9$ \\ 
    \epie{} & 76 & $4.4\pm0.7$ \\ 
    \midrule
    \textbf{Total} (unique) & 814 & $4.7 \pm 1.8$ 
    \end{tabular}
    \caption{Statistics per data source in \idiomem{}.
    }
    \label{table:idiom_stats}
\end{table}

\item \textbf{Idioms whose target is commonly predicted from the prompt's subsequence.}
We filter such cases to ensure the prompt fulfills our second criterion (prompt irreducibility).

To detect these cases, we use an ensemble of pretrained LMs: \gptm{}, \textsc{RoBERTa-base} \citep{Liu2019RoBERTaAR}, \textsc{T5-base} \citep{kale-rastogi-2020-text} and \textsc{ELECTRA-base-generator} \citep{Clark2020ELECTRA}, and check for each model if there is an n-gram ($1\leq n\leq4$) in the prompt from which the model predicts the target. We filter out idioms for which a majority ($\geq 3$) of models predicted the target (for some n-gram). 

\item \textbf{Idioms whose targets are semantically similar to tokens in the prompt.} To further ensure prompt irreducibility, we embed the prompt's tokens and the target token using GloVe word embeddings \citep{pennington-etal-2014-glove}. We measure the cosine distance between the target token to each token in the prompt separately and take the maximum of all the tokens. We filter out idioms where this number is higher than 0.75 (this number was tuned manually using a small validation set of idioms).

\end{itemize}

Overall, 55.7\% of the idioms were filtered out, including 48.5\% by length, 6.1\% by the predictable-target test, and an additional 1.6\% by the prompt-target similarity, resulting in an 814 idioms dataset, named \idiomem{}.
Further statistics are provided in Tab.~\ref{table:idiom_stats}, and example idioms in Tab.~\ref{table:examples}.

\begin{table}[t]
\setlength\tabcolsep{4.0pt}
\setlength{\belowcaptionskip}{-8pt}
\centering
\footnotesize
\begin{tabular}{@{}p{2.2cm}l|cc|c@{}}
\textbf{Prompt} & \textbf{Target} & \textbf{Pred.} & \textbf{Sim.} & 
\textbf{\idiomem{}} \\
\toprule
\nl{make a mountain out of a} & \texttt{molehill} &  & & \checkmark \\
\nl{think outside the} & \texttt{box} &  & &   \checkmark\\
\nl{there's no such thing as a free} & \texttt{lunch} &  & & \checkmark \\ \midrule
\nl{go back to the drawing} & \texttt{board} & \checkmark &   & \\ 
\nl{boys will be} & \texttt{boys}  &  & \checkmark  &  \\ 
\nl{take it or leave} & \texttt{it} & \checkmark &  \checkmark &  \\
\bottomrule
\end{tabular}
\caption{Example English idioms included and excluded from \idiomem{} by our filters of predictable target (Pred.) and prompt-target similarity (Sim.).}
\label{table:examples}
\end{table}

\begin{table*}[t]
    \setlength{\belowcaptionskip}{-8pt}
    \centering
    \footnotesize
    \begin{tabular}{@{}lcccc@{}}
    &  \textbf{\gptm{}} & \textbf{\gptl{}} & \textbf{\bertb{}} & \textbf{\bertl{}} \\
    \midrule
    Memorized idioms (\mem{}) & 364 \tcbox{\footnotesize{44.7\%}} & 392 \tcbox{\footnotesize{48.2\%}} &  230 \tcbox{\footnotesize{28.3\%}}  & 305 \tcbox{\footnotesize{37.5\%}} \\
    Non-memorized idioms (\unmem{}) & 450 \tcbox{\footnotesize{55.3\%}} & 422 \tcbox{\footnotesize{51.8\%}} &  584 \tcbox{\footnotesize{71.7\%}} & 509 \tcbox{\footnotesize{62.5\%}} \\
    \bottomrule
    \end{tabular}
    \caption{Number of memorized idioms vs. non-memorized idioms from the \idiomem{} dataset for each model. An instance is considered a memorized example if the model correctly predicts the target.}
    \label{tab:models_sets}
\end{table*}

\section{Probing Methodology}
\label{sec:interpretation}

\paragraph{Background and Notation}
Assuming a transformer LM with $L$ layers, a hidden dimension $d$ and an input/output-embedding matrix $E\in\R^{\lvert V \rvert \times d}$ over a vocabulary $V$. Denote by $\mathbf{s} = \langle s_1,...,s_t \rangle$ the input sequence to the LM, and let $\mathbf{h}_i^\ell$ be the output for token $i$ at layer $\ell$, for all $\ell\in 1,...,L$ and $i\in 1,...,t$. The model's prediction for a token $s_i$ is obtained by projecting its last hidden representation $\mathbf{h}_i^L$ to the embedding matrix, i.e. $softmax(E\mathbf{h}_i^L)$.

Following \cite{geva-etal-2021-transformer,geva2022transformer}, we interpret the prediction for a token $s_i$ by viewing its corresponding sequence of hidden representations $\mathbf{h}_i^1, ..., \mathbf{h}_i^L$ as an evolving distribution over the vocabulary. Concretely, we read the ``hidden''  distribution at layer $\ell$ by applying the same projection to the hidden representation at that layer: $\mathbf{p}_i^\ell = softmax(E\mathbf{h}_i^\ell)$.
Using this interpretation, we track the probability and rank of the predicted token in the output distribution across layers. A token's rank is its position in the output distribution when sorted by probability from highest to lowest (e.g. the rank of the final predicted token is zero).

\paragraph{Probing Procedure}
Our key method to understand how transformer LMs retrieve information from memory is \emph{comparing features of memory recall} to inference that does not necessarily include memory recall. Given a set of sequences that fulfill the criteria in~\S\ref{sec:input_criteria}, we split them into a ``memorized'' set whose targets' first token is predicted correctly by the model being analyzed given (and are therefore memorized), and a ``non-memorized'' set whose target is predicted incorrectly.
We additionally include a second set of ``non-memorized'' instances: natural-language sequences randomly sampled from a large corpus (we assume most naturally-occurring sequences are not memorized).

To probe a LM, we run it on the 3 sets, and for each set and each layer, we (a) extract the rank and probability of the final predicted token in the hidden distribution for each prompt, and (b) compute the mean rank and probability over all prompts.

\section{Probing Memorization Using Idioms}
\subsection{Experimental Setup}
\label{sec:experimental}

\paragraph{Datasets}
For each LM under analysis (see below), we split \idiomem{} into two disjoint subsets of memorized and non-memorized idioms, denoted as \mem{} and \unmem{}, respectively, according to whether or not the LM succeeds in completing them.  We produce an additional set of non-memorized instances, \wiki{}, by sampling prompts from the \wikitext{} dataset \citep{merity2017pointer} of the same length distribution as in \idiomem{} (see Tab.~\ref{table:idiom_stats}).

\paragraph{Models} 
We use multiple transformer LMs that are different in size, architecture, and optimization objective. We use \textsc{GPT2} (medium and large) \citep{Radford2019LanguageMA}, an autoregressive transformer decoder, and \textsc{BERT} (base and large) \citep{devlin-etal-2019-bert}, a transformer encoder trained with a masked language modeling (MLM) objective. To evaluate \textsc{BERT} on a specific idiom, we feed the idiom's prompt concatenated with the special mask token and a period (e.g. \nl{think outside the \texttt{[MASK]}.}).
Further details on each model are presented in Tab.~\ref{tab:models_hyperparameters}.
The number of memorized and non-memorized idioms from \idiomem{} for each model are provided in Tab.~\ref{tab:models_sets}.

\begin{figure*}[t]
    \setlength{\belowcaptionskip}{-8pt}
    \begin{center}
    \includegraphics[scale=0.28]{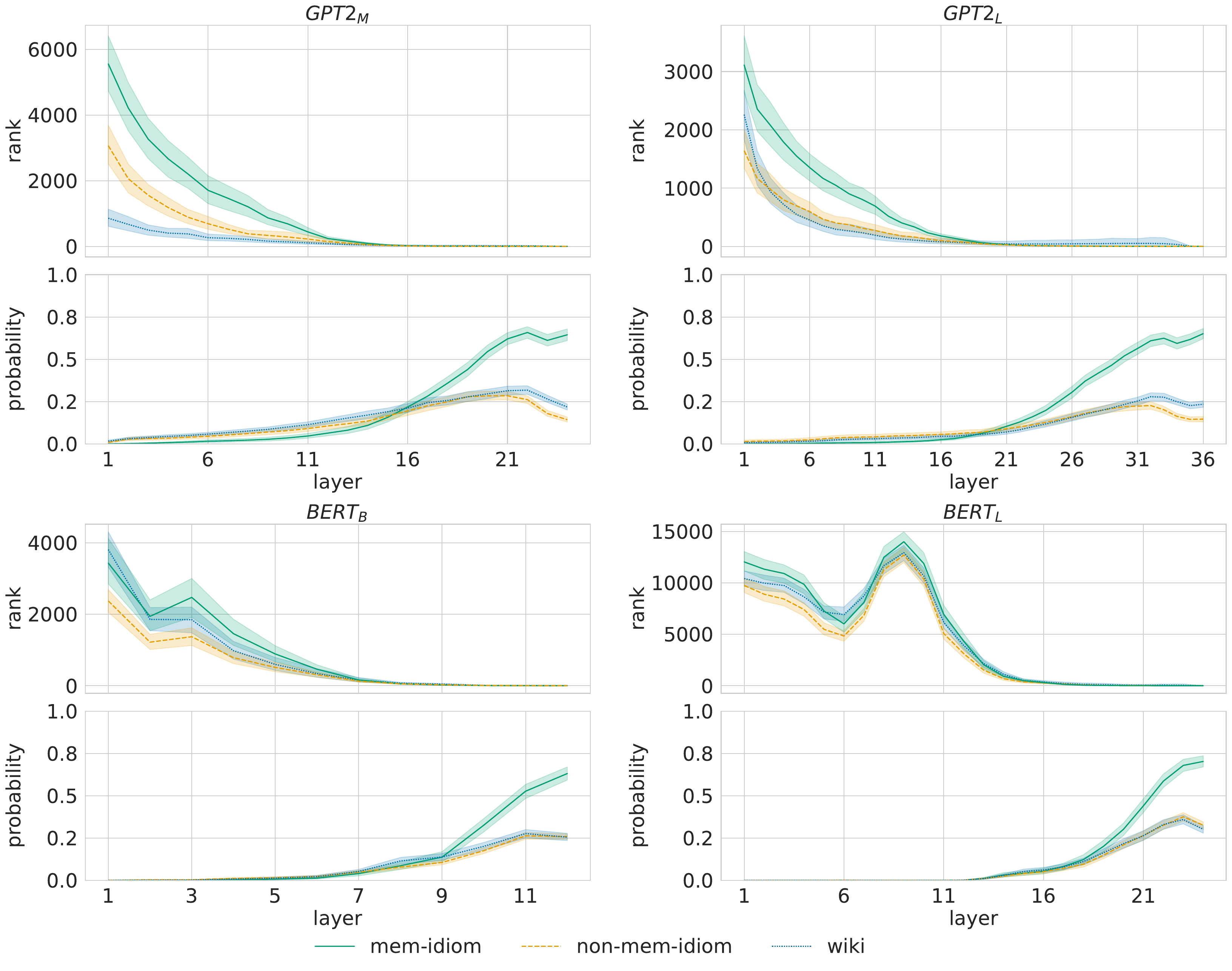}
    \end{center}
    \caption{The predicted token's probability and rank across layers, for memorized idioms (\mem{}), non-memorized idioms (\unmem{}) and short sequences from Wikipedia (\texttt{wiki}). Memory recall exhibits two characteristic phases of candidate promotion and confidence boosting.
    }
    \label{figure:intro}
\end{figure*}

\subsection{Memorized Predictions are a Two-Step Process}
\label{sec:two_phases}

Fig.~\ref{figure:intro} shows the probability and rank of the output token across layers, for memorized and non-memorized idioms and short sequences from Wikipedia, by \gptm{}, \gptl{}, \bertb{}, and \bertl{}.  Naturally, the prediction's rank decreases across layers as the prediction probability increases. However, for memorized predictions these trends occur as two distinct and sharp inference phases.
In lower layers, the prediction's rank decreases from a high rank to near zero, while its probability is also close to zero. For example, in \gptl{} the rank decreases until layer 20 while the probability remains below 0.1. We refer to this phase as \emph{candidate promotion}, as the predicted token is being promoted to be a top candidate in the output distribution. 

Compared to non-memorized predictions, the initial rank of memorized predictions is generally higher, especially in \gpt{} (6000 vs. 3000 in \gptm{}, and 3000 vs. 1500 in \gptl{}). 
A potential explanation would be a generally lower frequency of the predicted token for memorized idioms. However, we find there is only low negative correlation between the initial rank of the predicted token and its frequency (see Tab.~\ref{tab:rank_freq_corr}).
We therefore offer a different explanation: non-memorized predictions are often promoted in early layers that detect local ``shallow'' patterns, such as common bigrams~\cite{geva-etal-2021-transformer}, while predictions for memorized idioms are not local as they requires processing of the entire input.

In the middle layers, once the predicted token reaches the top of the hidden distribution, its probability increases until the last layer. We refer to this phase as \emph{confidence boosting}, as the distribution shifts towards the predicted token. For memorized idioms, this increase is abrupt and dramatic, with a final probability of $>0.6$ across all models. In comparison, predictions on short sequences from Wikipedia and non-memorized idioms have a substantially lower probability of $\sim0.2$. This can be explained by the fact that memorized idioms have a single correct target, rather than many possible continuations, as in the instances from Wikipedia. In addition, low-probability predictions for non-memorized idioms are expected as the model did not memorize the idioms and does not know their continuation. In~\S\ref{apx:two_step_add}, we provide more fine-grained analysis of these trends via a log-scaled view of the prediction's rank and a visualization of the ranks and probabilities for separate clusters of the memorized predictions.

We further verify that our extracted hidden-distribution features distinguish between memorized and non-memorized predictions by training linear classifiers over combinations of these features (details in \S\ref{apx:classifiers}). We observe that, indeed, our features enable separation between memorized and non-memorized predictions at high accuracy (77\%-85\% across models). Moreover, classifiers that use hidden distribution  features are more accurate than those relying only on the model's output. Overall, these findings provide a profile of memorized predictions, suggesting that the memorized information is retrieved in early layers at inference.

\begin{table}[t!]
    \centering
    \small
    \begin{tabular}{lcl}
          & correlation & p-value  \\
         \midrule
         \gptm{} & -0.22 & $2.9e^{-21}$  \\
         \gptl{} & -0.18 & $5.4e^{-14}$ \\
         \bertb{} & -0.19 & $1.8e^{-15}$ \\
         \bertl{} & 0.15 & $3.5e^{-10}$ \\
         \bottomrule
    \end{tabular}
    \caption{Pearson correlation between the predicted token's rank at the first layer and its general frequency in Wikipedia.}
    \label{tab:rank_freq_corr}
\end{table}

\begin{figure*}[t]
    \setlength{\belowcaptionskip}{-10pt}
    \begin{center}
    \includegraphics[scale=0.37]{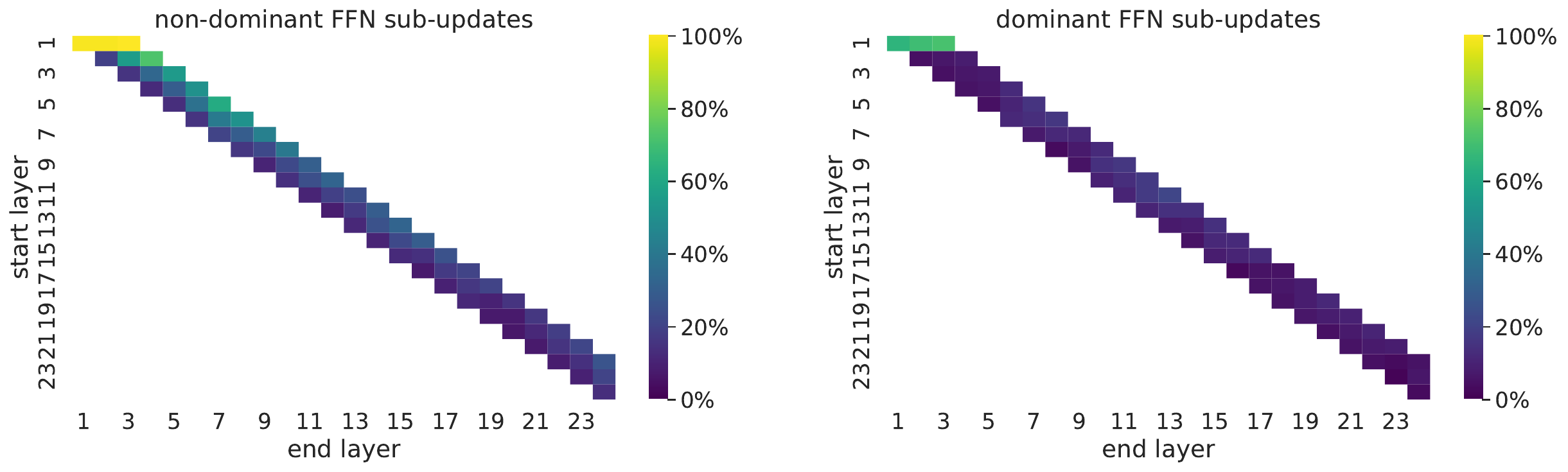}
    \end{center}
    \caption{
    Intervention in
    non-dominant (left) and dominant (right) FFN sub-updates in \gptm{}. Each cell shows the percentage of memorized idioms for which the prediction was changed by zeroing-out the FFN sub-updates between the start and end layers.}
    \label{figure:knockout}
\end{figure*}

\subsection{Testing the Roles of Different Layers Through Intervention} 
\label{sec:knockouts}

Our analysis in the previous section interprets hidden representations as distributions over the output vocabulary. 
We now conduct intervention experiments to verify that this interpretation is meaningful for studying memory recall, and to test layers' roles in that process.
Concretely, we zero out FFN sub-updates to the hidden representation (\S\ref{sec:interpretation}) and measure changes in memorized predictions.

\paragraph{A Short Primer on Transformer FFN Sublayers.}
FFN sublayers are the final computation in transformer layers, which output the hidden distribution at the center of our analysis. In general, they have a key role in capturing knowledge in transformer LMs \citep{dai-etal-2022-knowledge, meng2022locating}.
We follow \citet{geva2022transformer} and view the computation of each FFN sublayer as a weighted collection of $d_m$ sub-updates to the output distribution, each promoting a concept in the vocabulary space, e.g. ``past-tense verbs'' or ``female athletes''. To understand this, consider the computation of the FFN at layer $\ell$, given by $\texttt{FFN}^\ell(\mathbf{h}^\ell_i) = f(W_K^\ell \mathbf{h}_i^\ell)W_V^\ell$, where $W_K^\ell, W_V^\ell \in \R^{d_m \times d}$ are parameter matrices, $d_m$ is the intermediate hidden dimension, and $f$ is a point-wise non-linearity activation function. This computation can be decomposed as: $\texttt{FFN}^\ell(\mathbf{h}^\ell_i) = \sum_{j=1}^{d_m} f(\mathbf{h}_i^\ell\cdot \mathbf{k}_j^\ell)\mathbf{v}_j^\ell =  \sum_{j=1}^{d_m}m_j^\ell \mathbf{v}_j^\ell$, where $\mathbf{k}_j^\ell$ and $\mathbf{v}_j^\ell$ are the $j$-th row in $W_K^\ell$ and the $j$-th column in $W_V^\ell$, respectively.
Geva et al. argue that each weight $m_j^\ell$ is the score assigned by the model for some textual pattern, and each vector $\mathbf{v}_j$ promotes a concept that follows that pattern.

\paragraph{Experiment}
First, we sample 100 random instances from \idiomem{} that the model memorized. Then, for each range of up to 3 consecutive layers, we perform two complementary experiments, where we run \gptm's inference on the 100 instances while intervening in the chosen layers to cancel the contribution of FFN sub-updates to the prediction. Specifically, for each layer $\ell$ in the layer range, we perform the following two interventions: first, we zero out (i.e. artificially set to 0 during inference) the weights of the 10 most dominant sub-updates, which are known to be particularly salient for predictions~\cite{geva2022transformer} (there are $d_m$ sub-updates per layer). Concretely, we sort the sub-updates by their contribution to the FFN output $|m_i^\ell| ||\mathbf{v}_i^\ell|| \;\forall i\in [1,...,d_m]$, and set $m_i^\ell = 0$ for the 10 sub-updates with the highest contribution. 
Next, we zero out non-dominant sub-updates, i.e. all the sub-updates \emph{except} for the 10 most dominant ones.
For each intervention, we measure how often it changes the predicted token. Further measurements of the change in rank and probability of the target token are reported in \S\ref{apx:knockout_add}.

\paragraph{Results}
Fig.~\ref{figure:knockout} shows, for each layer range, the percentage of memorized idioms where the predicted token has changed.
Focusing on zeroing out non-dominant FFN sub-updates (Fig.~\ref{figure:knockout}, left), we observe a \emph{two-phase pattern of decreasing ``layer importance'' which corresponds to the two-phase pattern of decreasing rank and increasing probability during inference (\S\ref{sec:two_phases})}: layers' effect on memory recall drops precipitously in the first 10 layers, corresponding to the candidate promotion phase. Then, from around layer 10 onwards, the drop in effect is much less steep, corresponding to the confidence-boosting phase. Intervention in upper layers rarely changes the predicted token, and its effect is limited to reducing the model's confidence (\S\ref{apx:knockout_add}). We further visualize this two-phase behavior in Fig.~\ref{fig:intervention_triplets} in \S\ref{apx:knockout_add}.
% In the appendix we further visualize the two-phase behavior via a 2D graph plot of the percentage of affected predictions vs. layer, see Fig.~\ref{fig:intervention_triplets}.

These findings suggest that the candidate promotion phase, while having a smaller effect on the prediction's assigned probability compared to the later confidence-boosting stage, in fact, has a crucial role in memory recall.

We also observe that interventions in the first layer are by far the most destructive, with 100\% change in prediction for non-dominant updates (Fig.~\ref{figure:knockout}, left).
Unlike for other layers, this is also observable when zeroing out the dominant sub-updates (Fig.~\ref{figure:knockout}, right), which constitute only 0.1\% of layer sub-updates. This suggests the first layer is especially critical for memorized predictions.

\section{Memorization of Factual Statements}
\label{sec:facts}

We now examine if our findings generalize beyond idioms to other types of memory recall, focusing on the completion of statements expressing facts.

\paragraph{Data}
Datasets for evaluating memorization of factual knowledge typically contain simple queries such as \nl{The continent of Kuwait is}, where predicting the next token requires knowledge of the triplet $\langle s, r, t \rangle$ where $s$ is a source entity (e.g., \texttt{Kuwait}), $t$ a target entity (e.g., \texttt{Asia}), and $r$ is the relation between them (e.g., \texttt{is the continent of}). 
However, unlike idioms, such queries are not suitable for probing memorization since they often fail to satisfy the criteria in \S\ref{sec:input_criteria}. Concretely, queries often include ``clues'' that could make the prediction easy to guess and based on generalization \citep{Poerner2020EBERTEE} (e.g. predicting a Spanish-speaking country for the query \nl{Federico López was born in}), and the same fact can be expressed in multiple different ways (e.g. \nl{Kuwait is a country in Western Asia} also encodes the above fact).

To test memorization of facts, we, therefore, collect factual statements where such cases are excluded. We use \lamauhn{} \citep{Poerner2020EBERTEE}, a subset of the \lama{} dataset \citep{petroni-etal-2019-language}, where ``easy-to-guess'' queries are filtered out. \lama{} comprises of queries structured as ``fill-in-the-blank'' cloze statements (e.g. \nl{Gordon Scholes is a member of the \textunderscore\textunderscore\textunderscore\textunderscore political party.}). To accommodate autoregressive LMs, we consider only queries where the blank appears at the end. 
In addition, we keep only queries with a single correct completion (based on \textsc{LAMA}). 
This turns our definition of memorized and non-memorized sets (\S\ref{sec:interpretation}) equivalent to separating based on the evaluation metric of LAMA:
an instance is considered memorized if the model predicts the single correct completion and non-memorized otherwise. Overall, the resulting collection consists of 17,855 factual statements with 22 unique relations.

\paragraph{Memorized Facts Exhibit a Similar Prediction Process to Idioms}
\label{subsec:resultsprocess}

We repeat our analysis (\S\ref{sec:experimental}), using the collected factual statements and \gptm{}. Splitting the statements to those memorized and non-memorized by the model results in 786 \texttt{mem-facts} vs. 17,069 \texttt{non-mem-facts} statements.

Fig.~\ref{figure:facts_main} shows the rank and probability of the predicted token across layers. Like idioms, memorized facts exhibit a clear two-phase prediction process, where the prediction probability rapidly increases once the candidate reaches a low rank (at layer 16). This is in contrast to non-memorized facts and short sequences from Wikipedia, where the rank (probability) gradually decreases (increases) across layers without a distinct two phases.

\begin{figure}[t]
    \setlength{\belowcaptionskip}{-10pt}
    \begin{center}
    \includegraphics[scale=0.39]{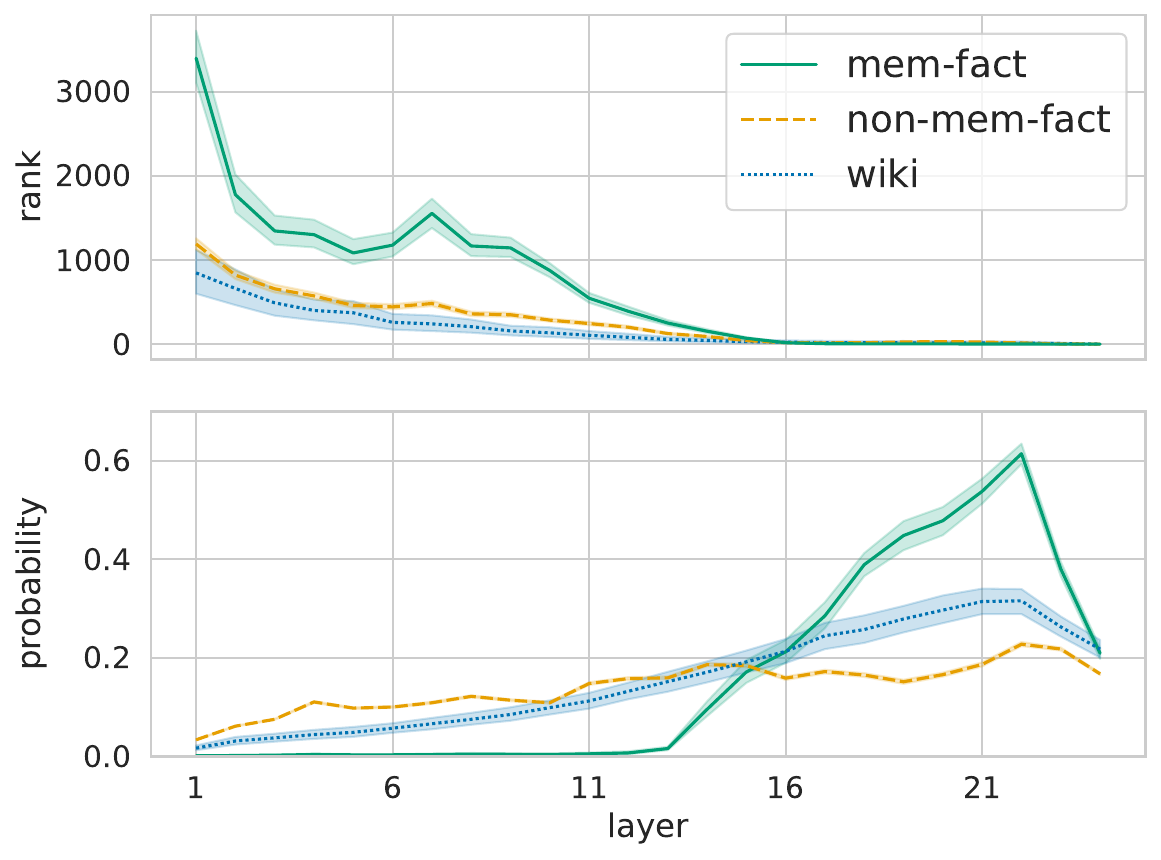}
    \end{center}
    \caption{The predicted token's probability and rank across layers of \gptm{}, for memorized (\texttt{mem-fact}) and non-memorized (\texttt{non-mem-fact}) facts and short sequences from Wikipedia (\texttt{wiki}).
    }
    \label{figure:facts_main}
\end{figure}

\paragraph{Differences from Memorized Idioms Stem from Ill-Defined Targets}
There is one major difference compared to memorized idioms (Fig.~\ref{figure:facts_main} vs. Fig.~\ref{figure:intro} upper left), which is a substantial drop in probability ($0.62 \rightarrow 0.21$) in the last two layers. We hypothesize that this is because the input query has multiple plausible completions that were not specified as ``correct'' targets in \textsc{LAMA}.
We verify this by manually analyzing predictions, and find that for 82 out of 100 queries, there is more than one correct continuation in the top 5 predicted tokens. 
We posit that the above deficiency of \textsc{LAMA} is inherent because, in violation of our criterion (\S\ref{sec:input_criteria}), factual statements can usually be expressed in many ways so their prompt has no single correct target.

\section{Related Work}

\paragraph{Memorization as Training-Data Influence.}
Memorization in LMs has attracted immense attention due to their rich sensitive training data~\citep{carlini2019secret, song2019auditing, carlini2021extracting, zhang2021counterfactual, carlini2022quantifying, tirumala2022memorization, tanzer-etal-2022-memorisation,Raunak2022FindingME}.
Recent work suggests that memorization is \emph{necessary} for performant ML due to the ``long tail'' of infrequently-observed patterns~\citep{zhang2021understanding,feldman2020does,brown2021memorization,raunak-etal-2021-curious}. 
This line of work has two key limitations: (a) only black-box behavior is measured rather than looking at models' internal prediction process, and (b) it detects memorized instances by measuring the effect of their inclusion in the training set on inference behavior. This results in a set of memorized examples that is specific to the model, training data, and even training pass, making it difficult to build on these results in future research.

\paragraph{Transformers and Idioms.}
\citet{nedumpozhimana2021finding} showed that idioms are identified using textual cues within the expression; \citet{dankers-etal-2022-transformer} showed that idioms tend to be internally processed as single units of meaning.
It has also been known \citep{fakharian2021contextualized, salton2016idiom} that LM contextual embeddings encode information about whether or not an expression is idiomatic (vs. literal). \citep{shwartz-dagan-2019-still,chakrabarty-etal-2022-rocket} also studied representations and interpretation of non-compositional sequences, such as idioms. No prior work used idioms to probe LM memorization, which is one of the main contributions of this work. 
We release our dataset, \idiomem{}, to facilitate future research on memorization recall in LMs. Diagnostic datasets, such as \idiomem{}, have often proven useful in the past \cite{sugawara-etal-2022-makes, parrish-etal-2021-nope}.

%\adi{For that reason, we publicly release our dataset, \idiomem{}, for further research. Diagnostic datasets, such as \idiomem{} have often been widely useful \cite{sugawara-etal-2022-makes,parrish-etal-2021-nope}.} 

\paragraph{Memorization of Factual Knowledge.}
An extensive line of work \citep{petroni-etal-2019-language, jiang2020can, poerner2019bert, lewis2020retrieval, elazar2021measuring} studied LMs' capacity to acquire relational knowledge during training.
Some attention has also been given to understanding the inner workings of factual-memory recall: \citet{singh-etal-2020-bertnesia} showed that some facts are retrieved from the bottom and intermediate layers, and \citet{meng2022locating} localized factual-knowledge recall within the feed-forward-component computation. Since factual statements do not fulfill our criteria, it is difficult to convincingly argue that correct predictions indicate memory recall, making it impossible to use them to isolate the effect of memorization.

\section{Conclusion}
We introduce a methodological framework for detecting and analyzing memorized predictions in transformer LMs.
This includes a set of criteria on textual sequences for probing memorized predictions, the \idiomem{} dataset of idioms fulfilling these criteria, and an interpretation method of prediction internals.
We characterize a behavioral profile that is unique to predictions involving memory recall and is observable across different LMs and data types. 
By providing these fundamental tools and initiating a thread of research on the phenomena we observe, we hope to empower future work toward demystifying transformer memorization.

\section*{Limitations}
Our criteria for detecting memorized instances are sufficient but not necessary, which raises the question of what other sequences that trigger memory recall satisfy them.

Additionally, while correct prediction for sequences that fulfill our criteria implies memory recall, incorrect prediction does not necessarily imply that no memory was used. This means that our set \unmem{} might include some portion of memorized sequences. This does not qualitatively affect our results as long as \mem{} still contains \emph{more} memorized predictions than \unmem, as is evidenced by the stark differences we observe between LMs' internal behaviors on these sets. Our work focuses on showing the utility of idioms for probing memorization, and opening up a new thread of research in this vein. We therefore leave further investigation of the above gaps for future work.

\section*{Acknowledgements}
We thank Gabriel Stanovsky for helpful feedback.
This project has received funding and resources from CIFAR through the Canada CIFAR AI Chair program, the Province of Ontario, companies sponsoring the Vector Institute, as well as the Computer Science Scholarship granted by the Séphora Berrebi Foundation, and the European Research Council (ERC) under the European Union's Horizon 2020 research and innovation programme, grant agreement No. 802774 (iEXTRACT).

% Entries for the entire Anthology, followed by custom entries
\bibliography{anthology,custom}
\bibliographystyle{acl_natbib}

\clearpage
\appendix
\begin{table*}[t]
    \centering
    \small
    \begin{tabular}{l|cccc}
          & \gptm{} & \gptl{} & \bertb{} & \bertl{}  \\
         \midrule
          probability & $\bm{84.6 \pm 2.8} $ & $\bm{84.0 \pm 2.3}$ & $\bm{84.2 \pm 1.8}$ & $\bm{77.6 \pm 3.3}$ \\
          ranks & $63.2 \pm 1.6$ & $59.5 \pm 2.7$ & $73.3 \pm 3.3$ & $64.5 \pm 3.6$ \\
          probability + ranks & $83.6 \pm 2.7$ & $82.9 \pm 2.9$ & $82.7 \pm 2.2$ & $76.9 \pm 2.9$ \\
          ranks layer 1-12 + probability & $\bm{84.3 \pm 2.8}$ & $83.4 \pm 2.7$ & $82.7 \pm 2.2$ & $\bm{77.2 \pm 4.2} $ \\ \midrule
          probability last layer & $79.5 \pm 3.5$ & $83.3 \pm 2.3$ & $77.7 \pm 2.3$ & $74.2 \pm 4.8$ \\
          final hidden state & $72 \pm 2.1 $ &  $72.5 \pm 3.7 $ & $72.7 \pm 2.9$ & $66.0 \pm 3.0$ \\
          token ids & $58 \pm 0.2 $ & $59.4 \pm 0.2 $ & $58.8 \pm 0.2 $ & $71.6 \pm 0.2 $ \\
          random & $49.5 \pm 3.8$ & $49.7 \pm 4.7$ & $50.4 \pm 4.9$ & $50.1 \pm 3.5$ \\
         \bottomrule
    \end{tabular}
    \caption{Cross-validation accuracy of a logistic-regression classifier trained to distinguish between memorized and non-memorized idioms.}
    \label{tab:classifiers}
\end{table*}

\section{Distinguishing Memorization Using Hidden-Distribution Features}
\label{apx:classifiers}
\S\ref{sec:two_phases} shows differences in our extracted hidden-distribution features, namely the rank and probability across layers, between memorized idioms and non-memorized sequences. To verify that these features are distinguishing between memorized and non-memorized predictions, we build a classifier that receives them as input, as follows.

\paragraph{Experiment}
To answer the above, we represent every instance in \idiomem{}, for every LM we experiment with, as a sequence of probabilities and ranks assigned to the predicted token at each layer. This results in a feature vector for each instance in \idiomem{} for each of our LMs (\gptm{}, \gptl{}, \bertb{}, and \bertl{}). We then append a class label for each LM's instances corresponding to whether it memorized them. Then, for each LM's dataset, we perform 10-fold cross-validation with an 80\%-20\% train-test split to evaluate the accuracy of a logistic-regression classifier using the LogisticRegresion classifier of scikit-learn,\footnote{\url{https://scikit-learn.org/}} specifying L1 penalty and a bilinear solver as constructor parameters. After each split and before evaluating the classifier, we balance the test set by replacing the larger class with a random subsample the size of the smaller class. To isolate the distinguishing utility of ranks from that of probabilities, we repeat this process while only taking either of them as features at each time. We also repeat this process while separately considering just the last-layer probability as a single feature, the last hidden state vector, and finally, the ranks in layers of layers 1-12 (omitting ranks in layers 13-16 where ranks are usually 0) appended to all layer probabilities.

\paragraph{Results}
Results are given in Tab.~\ref{tab:classifiers}. We observe that, across all models, most classifiers perform well over the 50\%-accuracy baseline for their class-balanced test sets. The output probability alone is often highly distinguishing with around 78\% accuracy, but the vector of 16 hidden-distribution probabilities seems to contain additional distinguishing information, as using it alongside ranks results in higher accuracy, typically around 84\%. Using the ranks in addition to probabilities usually \emph{decreases} accuracy, but omitting the ranks in layers 13-16 (which we know are mostly 0 as this is the confidence-boosting phase) attenuates this effect. We conjecture that ranks have little meaningful information, especially in the confidence-boosting phase.

\begin{figure}[t]
    \setlength{\belowcaptionskip}{-10pt}
    \begin{center}
    \includegraphics[scale=0.29]{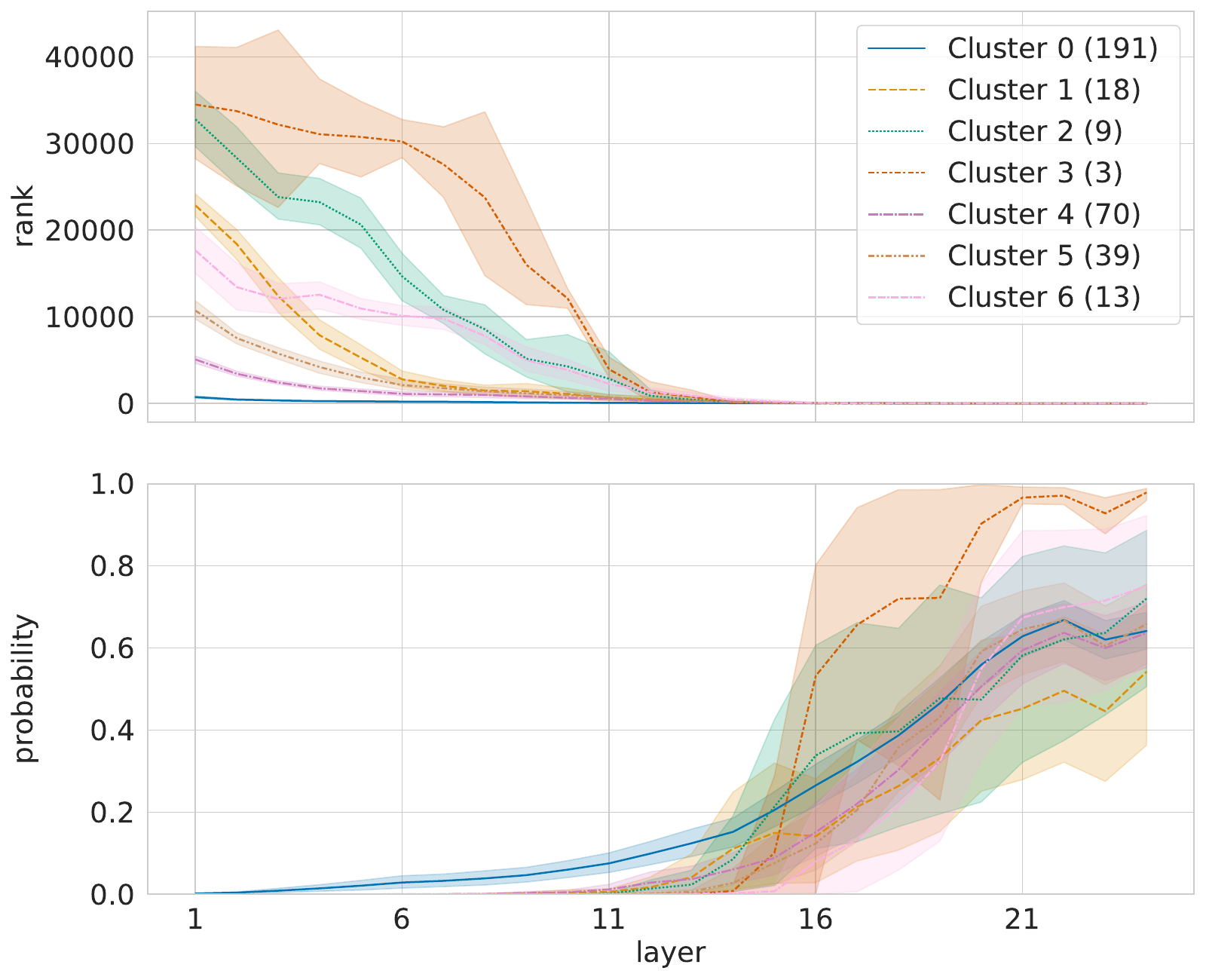}
    \end{center}
    \caption{The predicted token's probability and rank across layers for each cluster, after clustering the memorized idioms in \gptm{} according to rank and probability across layers.}
    \label{figure:clusters}
\end{figure}

\section{Fine-grained Analysis of Memorized Predictions}
\label{apx:two_step_add}
\S\ref{sec:two_phases} shows how recall of memorized predictions exhibits two characteristic phases (specifically, see Fig.~\ref{figure:intro}). To shed light on this phenomenon, we conduct additional analysis.

\subsection{Memorized Idioms In-depth Breakdown}
\label{apx:clusters}
In our analysis, we address all memorized predictions jointly. We now check whether these averaged results are consistent across subgroups of the memorized idioms.
To this end, we cluster the memorized idioms by \gptm{}, using the same hidden features as in \S\ref{apx:classifiers}, i.e., each instance is represented by the predicted token probability and rank across layers. We then cluster the idioms into seven groups, using K-mean clustering, and visualize the prediction rank and probability across layers for each group.
We set the number of clusters to $k=7$ based on manual inspection, and as we observed no substantial differences in the resulting clusters for larger values of $k$.

Results are shown in Fig.~\ref{figure:clusters}. We find that all groups exhibit the confidence-boosting phase, as the prediction probability quickly increases starting from the intermediate layers. Notably, the lowest final probability observed is $>0.5$, which is substantially higher than the average probability of $\sim0.2$ for non-memorized predictions (\S\ref{sec:two_phases}).  
However, considering the prediction rank for the different groups, we observe a relatively large variation.
Specifically, we observe that 55\% of the instances (cluster 0) have a low initial rank. This further supports the findings in \S\ref{apx:classifiers}.

\subsection{Log-Scale Visualization}
\label{apx:log_scale}
Fig.~\ref{figure:2step_log} shows a log-scaled view of the graphs from Fig.~\ref{figure:intro}. We observe that, in terms of orders of magnitude (vs. absolute value), the differences in initial ranks between memorized and non-memorized predictions are more minor, especially in BERT models, whereas rank differences measured in upper layers are more stark.

\begin{figure*}[t]
    \setlength{\belowcaptionskip}{-8pt}
    \begin{center}
    \includegraphics[scale=0.28]{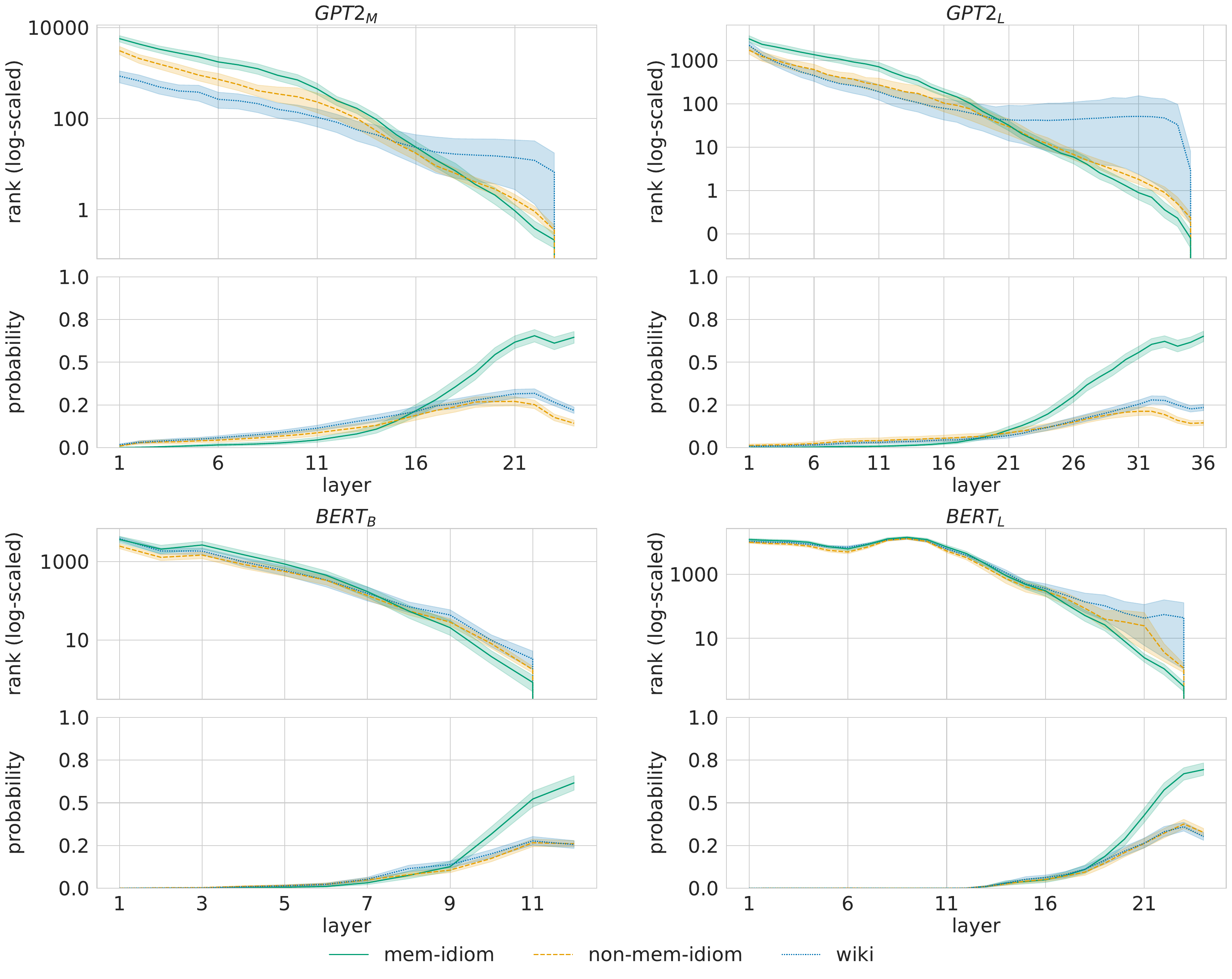}
    \end{center}
    \caption{The predicted token's probability and log-scaled rank across layers, for memorized idioms (\mem{}), non-memorized idioms (\unmem{}) and short sequences from Wikipedia (\texttt{wiki}).
    }
    \label{figure:2step_log}
\end{figure*}

\clearpage

\section{Intervention Experiments: Additional Analysis}
\label{apx:knockout_add}

\subsection{Additional Visualization}

\begin{figure}[t]
    \centering
    \includegraphics[scale=0.34]{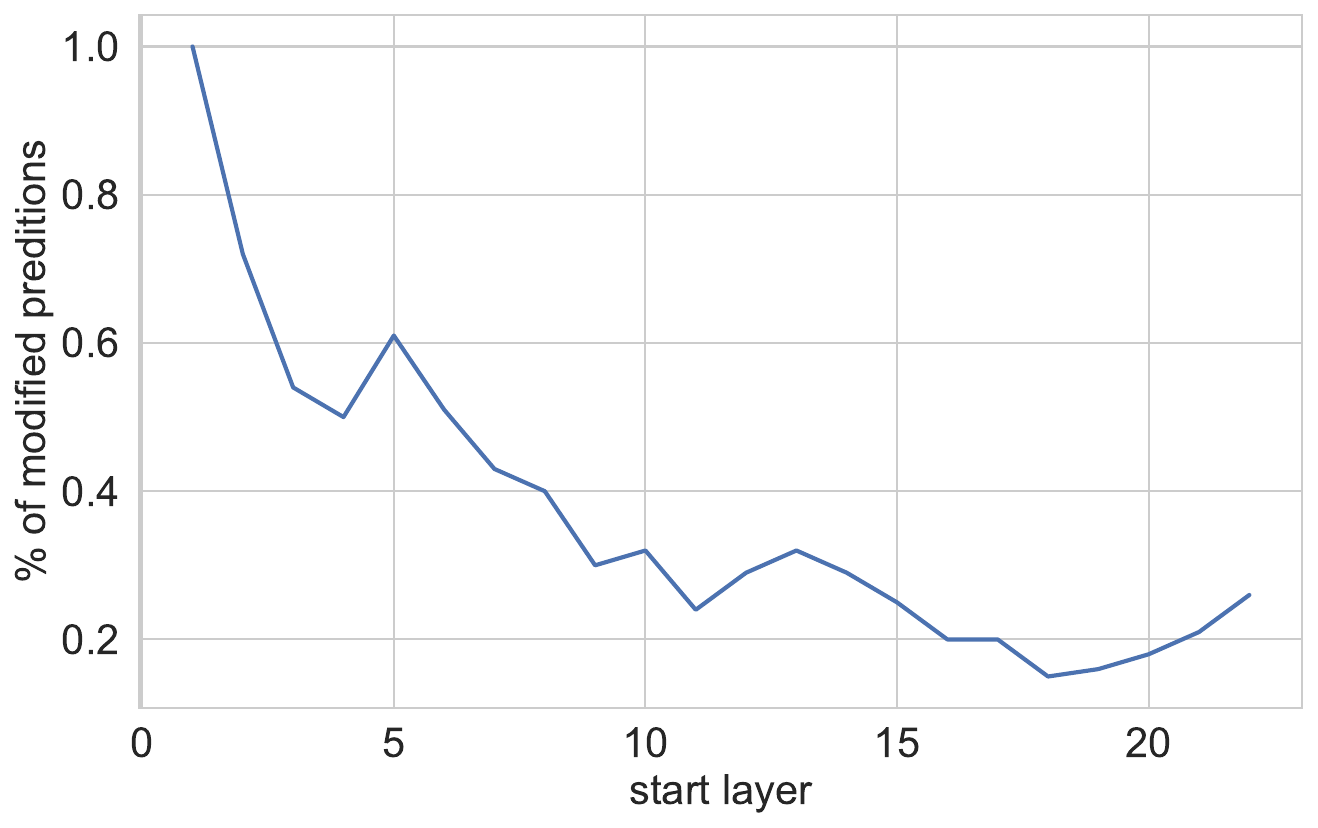}
    \caption{We visualize the effect of intervening in each 3-consecutive-layer range according to the procedure in~\S\ref{sec:knockouts}.}
    \label{fig:intervention_triplets}
\end{figure} 

In \S\ref{sec:knockouts}, we performed an intervention-based experiment to test the effect of zeroing out FFN sub-updates in layer computation. This produced a heat map of values corresponding to intervention's effect for each layer range. Here, we plot the intervention's effect across all 3-layer layer ranges. Note that since there are 24 layers, there are a total of 22 ranges of 3 consecutive layers. Fig.~\ref{fig:intervention_triplets} visualizes the effect of intervening in each such range.

As discussed in \S\ref{sec:knockouts}, we observe a steep drop in effect in the first ~10 layers, followed by a more leveled slope of decrease in the upper layers.

\subsection{Analyzing Changes in Rank and Probability of the Target Token}
In addition to measuring how often an intervention changes memorized predictions (\S\ref{sec:knockouts}), we further measure the average change in the rank and probability of the target token. Note that the original target rank for memorized predictions is always zero, as the target token is the top candidate in the original output distribution.

\paragraph{Change in target rank}
Fig.~\ref{figure:knowkout_rank}  shows the change in the target token's rank for all intervention experiments. 
Overall, we observe similar trends as in \S\ref{sec:knockouts}. First, zeroing out either dominant or non-dominant FFN sub-updates in upper layers (layers 11-24) does not affect memory recall, as the target token is still ranked as the top candidate in the output distribution. Moreover, zeroing out in early layers (1-10) damages memory recall as the target rank increases by $>20$ positions. Specifically, zeroing-out non-dominant FFN sub-updates in layers 2-4 increases the target rank by 60, and disabling either dominant or non-dominant sub-updates in the first layer completely eliminates memory recall as the rank increases to $>6000$.

\paragraph{Change in target probability}
Fig.~\ref{figure:knowkout_target_prob} shows the change in the target token's probability for all intervention experiments.
Unlike the prediction rank, which is mostly influenced by the lower layers during memory recall, the prediction probability is highly influenced by the intermediate and upper layers, where confidence boosting happens. Disabling FFN sub-updates in only three of these layers reduces the prediction probability by up to 33\%.
Considering the lower layers (1-9), zero-outs lead to a large probability decrease (up to 50\%). This is expected since these interventions often change the prediction, i.e. they eliminate the target from the top of the output distribution.

\begin{figure*}[t!]
    \begin{center}
    \includegraphics[scale=0.33]{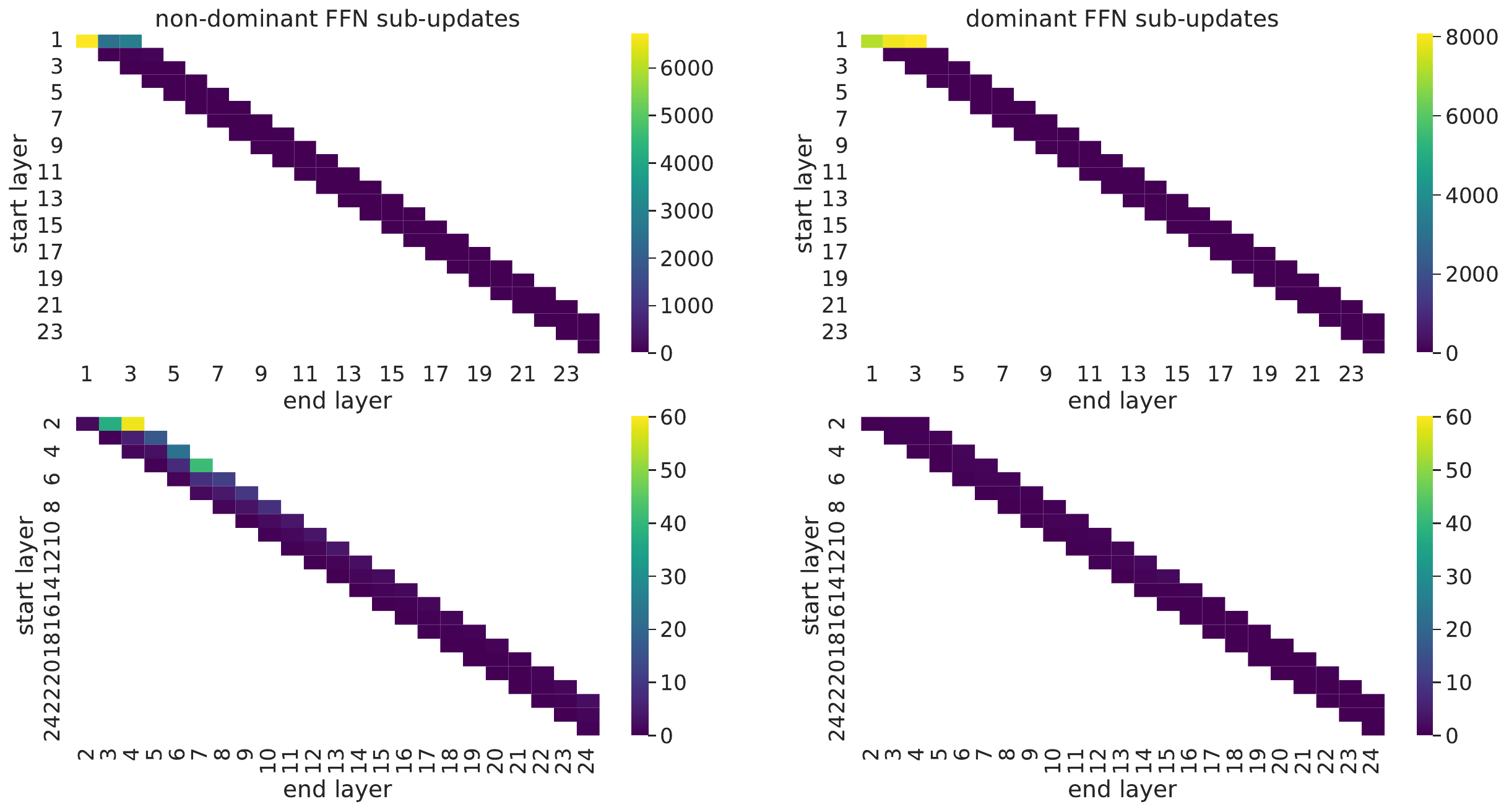}
    \end{center}
    \caption{Change in the rank of the target token following intervention zeroing out non-dominant (left) and dominant (right) FFN sub-updates in \gptm{}. Each cell shows the average change in rank of the target token after zeroing out the sub-updates in the layers between the start and end layers. For readability, we provide plots with the first layer (top) and without (bottom).}
    \label{figure:knowkout_rank}
\end{figure*}

\begin{figure*}[t!]
    \begin{center}
    \includegraphics[scale=0.33]{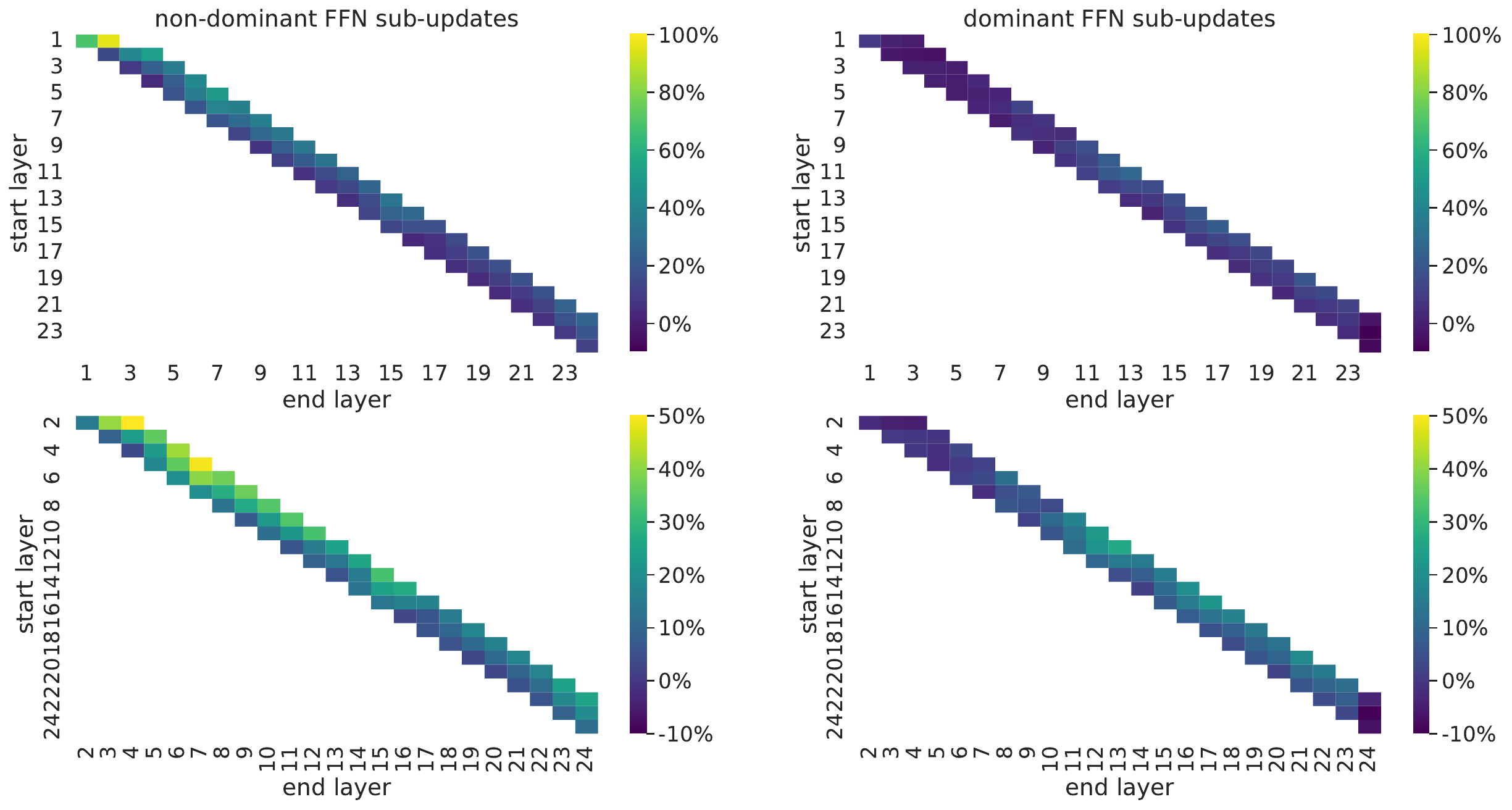}
    \end{center}
    \caption{Change in the probability of the target token following intervention zeroing out non-dominant (left) and dominant (right) FFN sub-updates in \gptm{}. Each cell shows the average change in the probability of the target token after zeroing out the sub-updates in the layers between the start and end layers. For readability, we provide plots with the first layer (top) and without (bottom).}
    \label{figure:knowkout_target_prob}
\end{figure*}

\section{Experimental Setting Details}
\label{apx:exp_settings}
Tab.~\ref{tab:models_hyperparameters} shows the evaluated models' hyperparameters. 

\begin{table*}[t]
    \centering
    \small
    \begin{tabular}{@{}lrrrr@{}}
    % \toprule
    &  \textbf{\gptm{}} & \textbf{\gptl{}} & \textbf{\bertb{}} & \textbf{\bertl{}} \\
    \midrule
    Layers & 24 & 36 & 12 & 24 \\ 
    Model hidden dimensions ($d$) & 1024 & 1280 & 768 & 1024 \\ 
    Feed-forward dimensions ($d_m$) & 4096 & 5120 & 3072 & 4096 \\  
    Attention heads & 12 & 20 & 12 & 16 \\ 
    Parameters & 345M & 774M & 110M & 340M \\  
    Vocabulary size (\# of tokens) & 50,256 & 50,256 & 30,522 & 30,522 \\  
    \bottomrule
    \end{tabular}
    \caption{The models' hyperparameters.}
    \label{tab:models_hyperparameters}
\end{table*}

\end{document}